%% file: main.tex
\documentclass{article}

\usepackage[main,final]{neurips_2026}
\input{preamble}

\usepackage[colorlinks=true,allcolors=blue]{hyperref}

\makeatletter
\renewcommand{\@notice}{}
\makeatother

\title{\textsc{PrimFlow}: Learning Trajectory Primitives for Few-Step Distillation}

\author{%
  Hongyi Fang \\
  Beijing Institute of Technology \\
  Zhuhai, China \\
  \texttt{hyfang@bit.edu.cn}
  \And
  Benjia Zhou\thanks{Corresponding author.} \\
  Beijing Institute of Technology \\
  Zhuhai, China \\
  \texttt{zhoubenjia@bitzh.edu.cn}
}

\begin{document}
\maketitle

\input{sec/0_abstract}
\input{sec/1_intro}
\input{sec/2_related}
\input{sec/3_method}
\input{sec/4_experiments}
\input{sec/5_discussion}

\bibliographystyle{plainnat}
\bibliography{main}

\end{document}

%% file: preamble.tex
\usepackage{amsmath,amssymb,amsfonts,amsthm,mathtools}
\usepackage{booktabs,multirow,tabularx,array}
\usepackage{graphicx}
\usepackage{float}
\usepackage{xcolor}
\usepackage{microtype}
\usepackage{xspace}
\usepackage{enumitem}

\newcommand{\method}{\textup{\textsc{PrimFlow}}\xspace}

\newtheorem{proposition}{Proposition}

\definecolor{placeholderbg}{RGB}{245,247,250}
\definecolor{placeholderline}{RGB}{160,171,185}

\setlist[itemize]{leftmargin=*,topsep=2pt,itemsep=1pt,parsep=0pt}
\setlist[enumerate]{leftmargin=*,topsep=2pt,itemsep=1pt,parsep=0pt}

%% file: sec/0_abstract.tex
\begin{abstract}
Few-step trajectory distillation aims to learn accurate global updates from dense local supervision along a teacher's sampling trajectory. Existing approaches construct within-step dynamics using specific trajectory parameterizations or discrete interval predictions, tying the learned update to a prescribed temporal structure. We propose \method, which directly learns a continuous trajectory primitive representing cumulative displacement along the teacher trajectory. By supervising finite differences of the primitive with local teacher velocities, \method learns the accumulated motion directly, while obtaining global updates through endpoint differences. This formulation avoids prescribing a temporal basis or fixed query grid and instead learns the temporal evolution of cumulative displacement from local supervision. We further establish a local-to-global error bound to characterize how local regression errors affect full-interval displacement accuracy, and introduce endpoint absorption to eliminate the auxiliary primitive branch while preserving one-network-evaluation inference. Experiments on large-scale text-to-image generation show that \method achieves a T-FID of 20.92 and an HPSv2 score of 28.35 at one NFE, improving both metrics over a trajectory-distillation baseline. After endpoint absorption, it retains comparable performance, achieving a T-FID of 20.86 and an HPSv2 score of 28.33.
\end{abstract}

%% file: sec/1_intro.tex
\section{Introduction}
\label{sec:intro}

Diffusion and flow matching models have achieved remarkable success in high-quality visual generation~\citep{ho2020denoising,song2021score,lipman2023flowmatching}. However, their iterative sampling procedures require many neural function evaluations (NFEs), motivating few-step distillation methods that transfer a pretrained teacher's behavior to a student with substantially fewer evaluations~\citep{salimans2022progressive,song2023consistency}. In trajectory distillation, the teacher provides dense local supervision along its sampling trajectory, while the student must reproduce the accumulated motion over much larger time intervals. How to represent within-step dynamics so that local teacher supervision can effectively guide the learning of global updates remains a central challenge.

Recent trajectory distillation methods address this challenge through different mechanisms for modeling within-step dynamics. $\pi$-Flow~\citep{chen2025piflow} learns an intra-step policy whose rollout generates a trajectory, ArcFlow~\citep{yang2026arcflow} parameterizes velocity evolution using analytically integrable momentum modes, and PDD~\citep{shaul2026pdd} predicts multiple subinterval quantities in parallel. These methods construct within-step motion through policy-based trajectory generation, prescribed velocity parameterization, or discrete interval prediction. This observation motivates a direct alternative: can we learn a continuous cumulative-displacement function directly from local teacher supervision, rather than constructing within-step dynamics through a prescribed velocity parameterization, policy rollout, or discrete interval predictions? Such a formulation would let the temporal shape of cumulative displacement be learned from the teacher signals themselves, without committing it to one of these constructions.

In this work, we propose \method, which directly learns cumulative displacement as a continuous function of time. Local teacher velocities supervise finite differences of this function over short intervals, while the displacement over any interval is obtained directly from the difference between its endpoint values. This allows a single learned function to support both local trajectory supervision and global student updates, without requiring a predefined temporal basis or fixed query grid.

We further analyze the proposed local supervision scheme and establish a local-to-global error bound relating local regression errors to full-interval displacement errors under smoothness and partition assumptions. For efficient deployment, we introduce endpoint absorption, which transfers the learned endpoint update into the native prediction head and removes the auxiliary primitive branch while preserving one-NFE inference. Experiments on large-scale text-to-image generation show that \method achieves a T-FID of 20.92 and an HPSv2 score of 28.35 at one NFE, compared with 23.55 and 27.71 for our reproduced trajectory-distillation baseline. After endpoint absorption, the corresponding scores are 20.86 and 28.33.

\textbf{Our main contributions are as follows:}
\begin{itemize}[leftmargin=*,nosep]
  \item We introduce \method, which directly learns a continuous cumulative-displacement function for trajectory distillation, reducing reliance on predefined mechanisms for constructing within-step dynamics.
  \item We formulate local finite-difference supervision for learning cumulative displacement and analyze its relationship to global displacement accuracy through a local-to-global error bound.
  \item We introduce endpoint absorption for branch-free, one-NFE deployment and obtain lower T-FID and higher HPSv2 than a reproduced trajectory-distillation baseline.
\end{itemize}

%% file: sec/2_related.tex
\section{Related Work}

\paragraph{Diffusion models.}
Diffusion models generate data by reversing a gradual noising process, typically through a stochastic differential equation or its probability-flow ODE~\cite{ho2020denoising,song2021score}. Flow matching provides a simulation-free alternative for learning the corresponding continuous velocity field by regressing conditional probability paths~\cite{lipman2023flowmatching}. Rectified flow adopts a linear coupling between data and noise and progressively straightens the induced transport paths~\cite{liu2023rectified}. These formulations have enabled high-quality visual generation, but inference still requires repeated velocity evaluations along the reverse trajectory. InstaFlow applies reflow to large text-to-image models and demonstrates that a sufficiently straightened flow can be distilled into a one-step generator~\cite{liu2024instaflow}.

\paragraph{Few-step distillation.}
Progressive distillation repeatedly halves the number of sampling steps~\cite{salimans2022progressive}, while consistency models enforce agreement among predictions from states on the same probability-flow trajectory~\cite{song2023consistency}. DMD and DMD2 instead optimize one-step generators through distribution matching~\cite{yin2024dmd,yin2024dmd2}, and Trajectory Distribution Matching combines marginal alignment with trajectory-level supervision~\cite{luo2025tdm}. Recent methods expose richer within-step dynamics: MeanFlow predicts a two-time average velocity~\cite{geng2025meanflow}, $\pi$-Flow learns an intra-step policy~\cite{chen2025piflow}, ArcFlow uses analytically integrable momentum modes~\cite{yang2026arcflow}, and PDD predicts multiple subinterval quantities in parallel~\cite{shaul2026pdd}. These approaches specify different temporal function families or assembly rules for teacher trajectories; \method instead takes the cumulative trajectory primitive as the primary learned object.

%% file: sec/3_method.tex
\section{Method}
\label{sec:method}

\method learns one continuous trajectory primitive from local teacher velocities and uses its endpoint to form the coarse student update. We first identify the shared local-to-global structure, then parameterize and learn the primitive before transferring its endpoint to the native prediction head.

\subsection{Preliminaries}
\label{sec:preliminaries}

Flow matching learns a velocity field that transports a base distribution to the data distribution along a chosen interpolation between noise and data~\citep{lipman2023flowmatching,liu2023rectified}. Given a text condition $c$, a sample $x_0\sim p_{\rm data}(\cdot\mid c)$, and Gaussian noise $\epsilon\sim\mathcal N(0,I)$, we use the linear path
\begin{equation}
  x_\sigma=(1-\sigma)x_0+\sigma\epsilon,
  \qquad u_\sigma:=\frac{d x_\sigma}{d\sigma}=\epsilon-x_0,
  \label{eq:rectified-path}
\end{equation}
where $\sigma\in[0,1]$ increases from data to noise. A conditional velocity model is conventionally trained by regressing the path velocity,
\begin{equation}
  \mathcal L_{\rm FM}(\phi)
  =\mathbb E_{\sigma,c,x_0,\epsilon}
  \left[\left\|v_\phi(x_\sigma,\sigma;c)-u_\sigma\right\|_2^2\right],
  \label{eq:flow-matching-loss}
\end{equation}
The learned marginal velocity defines the sampling ODE
\begin{equation}
  \frac{d z_\sigma}{d\sigma}=v_\phi(z_\sigma,\sigma;c),
  \qquad z_1\sim\mathcal N(0,I),
  \label{eq:teacher-ode}
\end{equation}
which is integrated from $\sigma=1$ to $\sigma=0$. Although the conditional path in Eq.~\eqref{eq:rectified-path} is linear, the marginal velocity depends on the current state, so an ODE trajectory $z_\sigma$ need not be straight. Numerical sampling repeatedly evaluates $v_\phi$ along this trajectory; few-step distillation instead learns coarse updates spanning much larger time intervals.

We use a pretrained, frozen $v_\phi$ as the teacher. For one coarse student step, fix the condition $c$ and source state $z_s$ at noise level $s$. Let $z_\sigma$, $\sigma\in[r,s]$, denote the teacher trajectory initialized at $z_s$ and integrated toward a cleaner endpoint $r<s$. For $0\leq r\leq b<a\leq s\leq1$, its interval displacement and average velocity are
\begin{equation}
  D^\star_{a\rightarrow b}:=z_a-z_b
  =\int_b^a v_\phi(z_\sigma,\sigma;c)\,d\sigma,
  \qquad
  \bar v^\star_{a\rightarrow b}:=\frac{D^\star_{a\rightarrow b}}{a-b}.
  \label{eq:teacher-disp}
\end{equation}
The corresponding reverse update is $z_b=z_a-D^\star_{a\rightarrow b}$. The student evaluates its backbone once at the step start,
\begin{equation}
  h_s=F_\psi(z_s,s,c),\qquad
  v^{\rm nat}_s=H_\omega(h_s),
  \label{eq:native}
\end{equation}
where $F_\psi$ produces the step-start feature and $H_\omega$ is the native velocity head. Our objective is to learn the coarse displacement $D^\star_{s\rightarrow r}$ from teacher velocities inside $[r,s]$ while retaining one backbone evaluation for the student update.

\subsection{Local and Global Integrals in Trajectory Distillation}
\label{sec:potential}

Along a fixed teacher trajectory, local and global motion are two scales of the same integral. The local displacement and average velocity are given by Eq.~\eqref{eq:teacher-disp}; during training, pointwise teacher velocities supervise predictions of these short-interval averages. Inference instead requires the average over the complete coarse step,
\begin{equation}
  \bar v^\star_{s\rightarrow r}
  =\frac{1}{s-r}\int_r^s
  v_\phi(z_\sigma,\sigma;c)\,d\sigma,
  \qquad
  z_r=z_s-(s-r)\bar v^\star_{s\rightarrow r}.
  \label{eq:global-integral}
\end{equation}
For a partition $s=\sigma_0>\sigma_1>\cdots>\sigma_m=r$, the global average is the interval-length-weighted combination of the local averages:
\begin{equation}
  \bar v^\star_{s\rightarrow r}
  =\sum_{i=1}^{m}
  \frac{\Delta_i}{s-r}\,
  \bar v^\star_{\sigma_{i-1}\rightarrow\sigma_i},
  \qquad \Delta_i:=\sigma_{i-1}-\sigma_i.
  \label{eq:local-global-integral}
\end{equation}
This local-integral/global-average relation is the common mathematical structure. Existing methods differ in the intermediate object used to realize it: an average-velocity predictor, an analytically integrable temporal family, a set of grid-wise interval predictions, or a sequentially rolled-out trajectory~\citep{geng2025meanflow,yang2026arcflow,shaul2026pdd,chen2025piflow}. \method instead represents the integrals themselves through a continuous primitive, as illustrated in Figure~\ref{fig:potential}.

\begin{figure*}[t]
  \centering
  \includegraphics[width=0.94\textwidth]{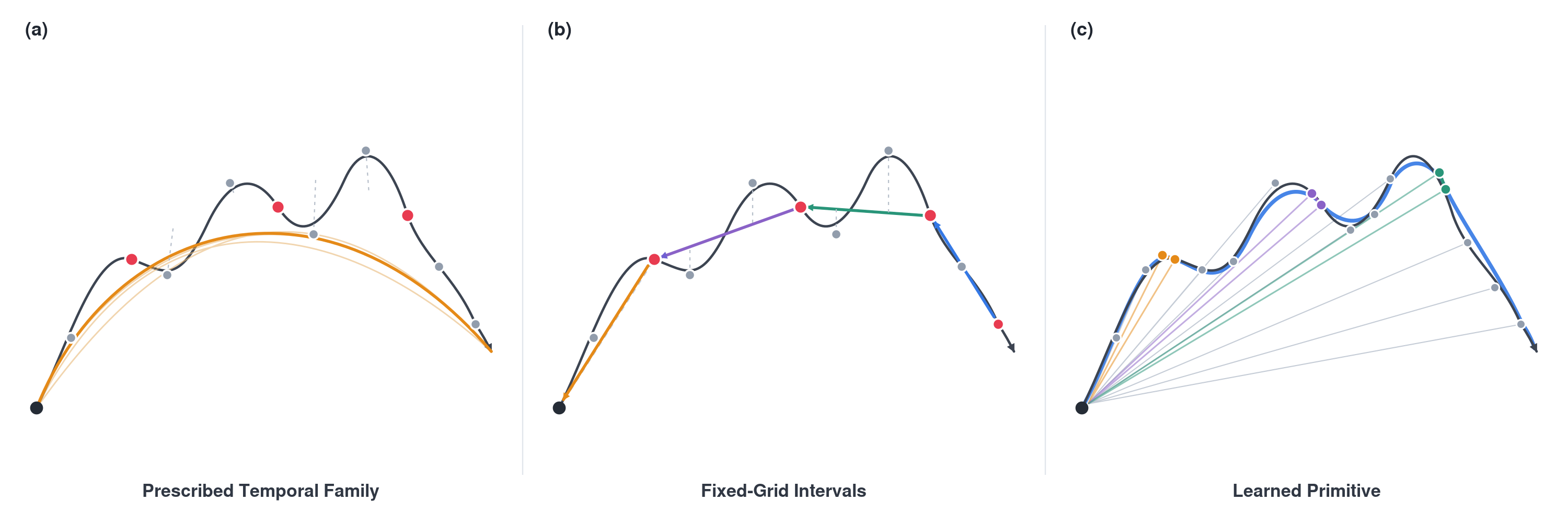}
  \caption{\textbf{Representing within-step trajectory geometry.} A prescribed temporal family or fixed interval grid constrains how the teacher trajectory is represented. \method directly represents interval integrals as differences of one source-anchored continuous primitive.}
  \label{fig:potential}
\end{figure*}

\subsection{Direct Integral Learning with a Source-Anchored Primitive}
\label{sec:learning}

\paragraph{Source-anchored primitive.}
Rather than first predicting a temporal family, discrete interval variables, or a rollout policy and then constructing the integrals, \method represents them directly as differences of a source-anchored primitive. For the teacher segment initialized at $(z_s,s)$, define
\begin{equation}
  P_s^\star(q)
  :=\int_q^s v_\phi(z_\sigma,\sigma;c)\,d\sigma
  =z_s-z_q,
  \qquad P_s^\star(s)=0.
  \label{eq:true-potential}
\end{equation}
The local integral in Eq.~\eqref{eq:teacher-disp} and global average in Eq.~\eqref{eq:global-integral} then become direct readouts:
\begin{equation}
  D^\star_{a\rightarrow b}
  =P_s^\star(b)-P_s^\star(a),
  \qquad
  \bar v^\star_{s\rightarrow r}
  =\frac{P_s^\star(r)}{s-r}.
  \label{eq:potential-difference}
\end{equation}

Given the step-start feature $h_s$, \method parameterizes this primitive as
\begin{equation}
  P^s_{\psi,\omega,\theta}(q)
  =(s-q)v^{\rm nat}_s+(s-q)R_\theta(h_s,s,q),
  \label{eq:learned-potential}
\end{equation}
where $R_\theta$ outputs a residual average velocity, not a displacement. Multiplying both rates by $(s-q)$ enforces $P^s(s)=0$ without constraining $R_\theta(h_s,s,s)$. The construction directly returns accumulated displacement and reuses $h_s$, with no temporal integration or rollout.

\paragraph{Learning from local finite differences.}
For an interval $a>b$, the predicted displacement and average velocity are
\begin{equation}
  \widehat D^s_{a\rightarrow b}
  :=P^s(b)-P^s(a),
  \qquad
  \widehat{\bar v}^{s}_{a\rightarrow b}
  :=\frac{P^s(b)-P^s(a)}{a-b},
  \qquad
  \widehat{\bar v}^{s}_{s\rightarrow r}
  :=\frac{P^s(r)}{s-r}.
  \label{eq:average-velocity}
\end{equation}
We sample $K$ short windows across each coarse interval and match each finite-difference average to the teacher velocity at its higher-noise endpoint. The supervised prediction is therefore the normalized primitive difference itself rather than an intermediate trajectory parameterization. A curriculum gradually replaces teacher advancement with detached student advancement when constructing the training states $\widetilde z_{a_k}$. The resulting objective is
\begin{equation}
  \mathcal L_{\rm local}
  =\frac{1}{K}\sum_{k=1}^{K}
  \left\|
  \widehat{\bar v}^{s}_{a_k\rightarrow b_k}
  -\operatorname{sg}\!\left[v_\phi(\widetilde z_{a_k},a_k;c)\right]
  \right\|_2^2,
  \label{eq:local-loss}
\end{equation}
where $a_k>b_k$ are nearby endpoints and $\operatorname{sg}$ denotes stop-gradient. Windows sampled at different locations constrain the temporal shape of the same primitive. Figure~\ref{fig:overview} summarizes the learning and deployment pipeline.

\begin{figure}[H]
  \centering
  \includegraphics[width=\linewidth]{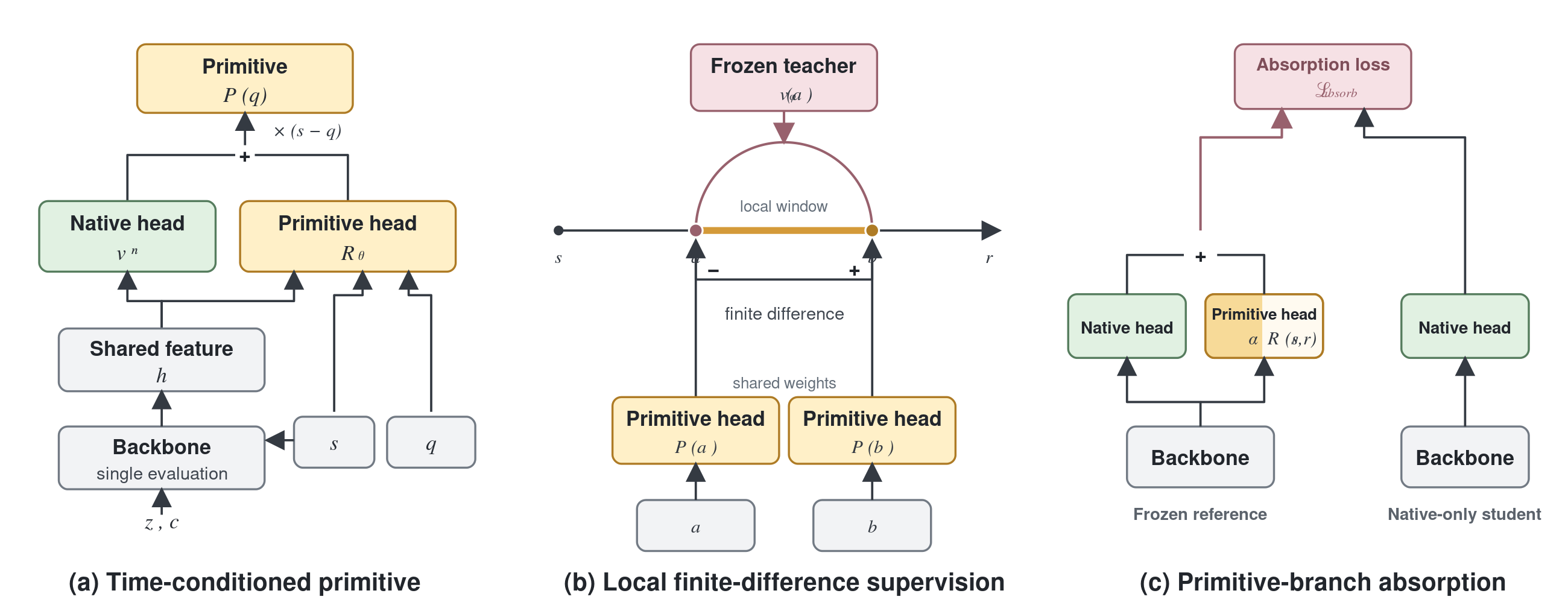}
  \caption{\textbf{Training and deployment of \method.} A single source feature supports continuous primitive queries. Local endpoint differences are matched to teacher velocities, and the learned deployment endpoint is subsequently absorbed into the native prediction head.}
  \label{fig:overview}
\end{figure}

\paragraph{Local-to-global control.}
To relate the local finite differences to the global average readout, we analyze a fixed teacher segment. Let $g(\sigma):=v_\phi(z_\sigma,\sigma;c)$ and consider a partition $s=\sigma_0>\sigma_1>\cdots>\sigma_m=r$. With $\Delta_i=\sigma_{i-1}-\sigma_i$, $T=s-r$, and $\Delta_{\max}=\max_i\Delta_i$, define
\begin{equation}
  \widehat v_i
  :=\frac{P^s(\sigma_i)-P^s(\sigma_{i-1})}{\Delta_i},
  \quad e_i:=\widehat v_i-g(\sigma_{i-1}),
  \quad E_{\rm part}:=\sum_{i=1}^{m}\Delta_i\|e_i\|_2^2.
  \label{eq:local-error}
\end{equation}

\begin{proposition}[Local-to-global control]
\label{prop:local-global}
If $g$ is $L$-Lipschitz along the segment and $P^s(s)=0$, then
\begin{equation}
  \left\|\widehat{\bar v}^{s}_{s\rightarrow r}
  -\bar v^\star_{s\rightarrow r}\right\|_2
  \leq \sqrt{\frac{E_{\rm part}}{T}}
  +\frac{L\Delta_{\max}}{2}.
  \label{eq:global-bound}
\end{equation}
\end{proposition}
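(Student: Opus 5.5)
The plan is to telescope the predicted global average over the partition and then separate the model's regression error from the left-endpoint quadrature error. Since $P^s(s)=0$ and $\sigma_0=s$, $\sigma_m=r$, we have $P^s(r)=\sum_{i=1}^m\bigl(P^s(\sigma_i)-P^s(\sigma_{i-1})\bigr)=\sum_{i=1}^m\Delta_i\widehat v_i$. Hence
\begin{equation*}
  \widehat{\bar v}^{s}_{s\rightarrow r}=\frac{1}{T}\sum_{i=1}^{m}\Delta_i\widehat v_i .
\end{equation*}
The target has the matching form $\bar v^\star_{s\rightarrow r}=\frac1T\sum_i\int_{\sigma_i}^{\sigma_{i-1}}g(\sigma)\,d\sigma$, which is Eq.~\eqref{eq:local-global-integral} with Eq.~\eqref{eq:teacher-disp}. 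Inserting $\pm g(\sigma_{i-1})$ splits the difference into two terms:
\begin{equation*}
  \widehat{\bar v}^{s}_{s\rightarrow r}-\bar v^\star_{s\rightarrow r}
  =\underbrace{\frac1T\sum_{i=1}^m\Delta_i e_i}_{(\mathrm{I})}
  +\underbrace{\frac1T\sum_{i=1}^m\int_{\sigma_i}^{\sigma_{i-1}}\bigl(g(\sigma_{i-1})-g(\sigma)\bigr)\,d\sigma}_{(\mathrm{II})}.
\end{equation*}
The triangle inequality then reduces the proposition to bounding each term separately.

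For (I), apply Cauchy--Schwarz with weights $\Delta_i$. We have $\|\sum_i\Delta_i e_i\|_2\le\sum_i\sqrt{\Delta_i}\cdot\sqrt{\Delta_i}\|e_i\|_2\le\bigl(\sum_i\Delta_i\bigr)^{1/2}E_{\rm part}^{1/2}=\sqrt{T\,E_{\rm part}}$. Dividing by $T$ gives $\sqrt{E_{\rm part}/T}$. Equivalently, one can use Jensen's inequality for the probability weights $\Delta_i/T$.

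For (II), use the Lipschitz assumption pointwise. For $\sigma\in[\sigma_i,\sigma_{i-1}]$ we have $\|g(\sigma_{i-1})-g(\sigma)\|_2\le L(\sigma_{i-1}-\sigma)$, so each integral has norm at most $L\Delta_i^2/2$. Summing and using $\Delta_i^2\le\Delta_{\max}\Delta_i$ gives
\begin{equation*}
  \|(\mathrm{II})\|_2\le\frac{1}{T}\sum_i\frac{L\Delta_i^2}{2}\le\frac{L\Delta_{\max}}{2T}\sum_i\Delta_i=\frac{L\Delta_{\max}}{2}.
\end{equation*}
Combining the two bounds yields Eq.~\eqref{eq:global-bound}.

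None of these steps is hard. The points needing care are bookkeeping: the orientation of the intervals, since $\sigma$ decreases along the partition and $e_i$ is defined at the higher-noise endpoint $\sigma_{i-1}$, which matches $\mathcal L_{\rm local}$; and moving the norm inside a vector-valued integral, $\|\int f\|\le\int\|f\|$. The one structural point is that the telescoping step uses only $P^s(s)=0$. This is exactly why the parameterization in Eq.~\eqref{eq:learned-potential} enforces that anchor, and no other property of $P^s$ is needed.
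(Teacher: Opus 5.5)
Your proposal is correct and follows the same route as the paper's proof. You telescope via $P^s(s)=0$ to write the predicted global average as the $\Delta_i/T$-weighted sum of the $\widehat v_i$, split the error into weighted local regression errors and left-point quadrature errors, and bound these by weighted Cauchy--Schwarz and the Lipschitz estimate $L\Delta_i^2/2$, which fills in exactly the computations the paper's sketch leaves implicit.
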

\begin{proof}
The finite differences satisfy $\sum_i(\Delta_i/T)\widehat v_i=P^s(r)/T=\widehat{\bar v}^{s}_{s\rightarrow r}$. Subtracting the teacher average yields weighted local regression errors and left-point quadrature errors. Weighted Cauchy--Schwarz bounds the former by $\sqrt{E_{\rm part}/T}$, while Lipschitz continuity bounds the latter by $L\Delta_{\max}/2$.
\end{proof}
For a partition of a reference segment, the global-average error is therefore controlled by the local finite-difference error and the temporal resolution of the teacher observations.

\subsection{Endpoint Absorption}
\label{sec:pruning}

At deployment, a step from $s$ to $r$ only requires the endpoint average velocity
\begin{equation}
  \begin{aligned}
    \bar v^{\rm prim}_{s\rightarrow r}
    &:=\frac{P^s(r)}{s-r}
    =\frac{(s-r)v^{\rm nat}_s+(s-r)R_\theta(h_s,s,r)}{s-r}\\
    &=v^{\rm nat}_s+R_\theta(h_s,s,r).
  \end{aligned}
  \label{eq:endpoint-target}
\end{equation}
Because this endpoint has the same output semantics as the native velocity head, we transfer it through a short output-distillation stage. A frozen reference retains the learned primitive, while a native-head-only student is initialized from the same checkpoint. Keeping the student backbone fixed, we update its output normalization and projection using
\begin{equation}
  \mathcal L_{\rm absorb}
  =\left\|H_{\widetilde\omega}(h_s^{\rm stu})
  -\operatorname{sg}\!\left[v^{\rm nat,ref}_s
  +\alpha_n R_\theta^{\rm ref}(h_s^{\rm ref},s,r)\right]\right\|_2^2,
  \label{eq:absorb}
\end{equation}
where $R_\theta^{\rm ref}$ is the frozen residual-rate network and $\alpha_n$ follows a half-cosine schedule from zero to one. At $\alpha_n=1$, the target equals Eq.~\eqref{eq:endpoint-target}. We discard the primitive branch and deploy with $z_r=z_s-(s-r)H_{\widetilde\omega}(h_s^{\rm stu})$.

%% file: sec/4_experiments.tex
\section{Experiments}
\label{sec:experiments}

Our experiments evaluate the learned primitive endpoint against a matched trajectory-distillation baseline and then examine its transfer to the native head for 1-NFE deployment.

\subsection{Setup}

\paragraph{Implementation.}
We instantiate \method on a FLUX.2 [klein] Base 9B rectified-flow text-to-image teacher--student pair at $1024^2$ resolution using prompts from Lakonik/t2i-prompts-3m. The backbone is adapted with rank-256 LoRA. The tokenwise primitive branch uses eight gated residual blocks of width 512, sinusoidal start/query embeddings, and an expansion ratio of four, for 25.80M parameters. Each example samples $K=8$ local windows spanning three of 128 raw-time substeps; the first 250 data-loading iterations shift the sampled states from teacher-advanced to detached student-advanced trajectories. We train with AdamW8bit at learning rate $10^{-4}$, loss scale 30, guidance scale 4.0, and EMA.

\paragraph{Matched comparison.}
We compare with a reproduced ArcFlow model because it also learns within-step trajectory dynamics and can use the same teacher and local supervision pipeline. Both methods use the same frozen teacher, prompt dataset, rank-256 LoRA locations, eight online teacher-velocity targets per example, and evaluation protocol. Both are trained for 2,500 iterations on four nodes (32 GPUs). ArcFlow uses 16 Gaussian momentum modes; \method replaces that temporal parameterization with the shared primitive. All reported distilled-model results use seed 1.

\paragraph{Metrics.}
T-FID is the Fr\'echet distance between InceptionV3 pool3 features from 3,200 student images and 3,200 images from the 50-step teacher, generated with the same prompts and initial noise. HPSv2 measures human-preference-aligned text--image quality. GenEval evaluates compositional correctness on 553 prompts and 2,212 images, and DPG-Bench provides a complementary prompt-following score. The 50-step teacher is evaluated with the same suite.

\subsection{Main comparison}

Table~\ref{tab:main} evaluates the trajectory-trained model through its primitive endpoint before absorption. Relative to our ArcFlow reproduction, T-FID changes from 23.55 to 20.92 and HPSv2 from 27.71 to 28.35. GenEval is 0.80 for both models, while DPG changes from 85.36 to 84.98. Table~\ref{tab:pruning} then evaluates the native-head deployment obtained from this endpoint prediction.

\begin{table}[H]
\centering
\caption{\textbf{Main quantitative comparison.} Distilled models use the same training configuration. The \method row uses the retained primitive parameterization before absorption. T-FID is measured against the 50-step teacher.}
\label{tab:main}
\scriptsize
\setlength{\tabcolsep}{3.2pt}
\begin{tabular}{lccccc}
\toprule
Method & Backbone NFE$\downarrow$ & T-FID$\downarrow$ & HPSv2$\uparrow$ & GenEval$\uparrow$ & DPG$\uparrow$ \\
\midrule
50-step teacher & 50 & -- & 28.81 & 0.81 & 85.55 \\
\midrule
ArcFlow (reproduced) & 1 & 23.55 & 27.71 & \textbf{0.80} & \textbf{85.36} \\
\method ($w{=}512,d{=}8$) & 1 & \textbf{20.92} & \textbf{28.35} & \textbf{0.80} & 84.98 \\
\bottomrule
\end{tabular}
\end{table}

\subsection{Endpoint absorption}

Table~\ref{tab:pruning} compares the retained primitive endpoint with the native-head-only student learned by Eq.~\eqref{eq:absorb}. Absorption updates only the output normalization and projection. The resulting model removes all 25.80M primitive-branch parameters and uses the original native inference graph. Both rows require one backbone evaluation, although the retained model additionally evaluates the primitive branch. On the reported metrics, HPSv2 changes from 28.35 to 28.33 and T-FID from 20.92 to 20.86.

\begin{table}[H]
\centering
\caption{\textbf{Endpoint absorption.} ``Native head'' removes the primitive branch after learning its endpoint update; ``primitive retained'' evaluates the trajectory-trained endpoint directly.}
\label{tab:pruning}
\small
\setlength{\tabcolsep}{9pt}
\begin{tabular}{lccc}
\toprule
Deployment & Auxiliary params & HPSv2$\uparrow$ & T-FID$\downarrow$ \\
\midrule
Primitive retained & 25.80M & \textbf{28.35} & 20.92 \\
Native head & 0 & 28.33 & \textbf{20.86} \\
\bottomrule
\end{tabular}
\end{table}

%% file: sec/5_discussion.tex
\section{Conclusion}

We introduced \method from a simple view of trajectory distillation: the quantity connecting dense local velocities to a few-step update is their cumulative displacement. By making this quantity the learned object, a continuous primitive can acquire its temporal evolution from local finite differences without committing it to a prescribed temporal family or trajectory-construction procedure. Our analysis relates these local errors to full-interval displacement under smoothness and partition assumptions. Endpoint absorption further shows that this flexible training representation need not alter the deployed model: its endpoint update can be transferred to the native head, leaving the original one-NFE inference graph. On large-scale text-to-image generation, this formulation yields lower T-FID and higher HPSv2 than the reproduced trajectory-distillation baseline, with both metrics retained after absorption. More broadly, \method provides a direct route from local trajectory supervision to efficient global-update learning.